\newcommand{\scheme}{SecFC\xspace}
\newcommand{\kfed}{$k$-FED\xspace}
\let\emptyset\varnothing
\newtheorem{theorem}{Theorem}
\theoremstyle{remark}
\newtheorem{remark}{\bf{Remark}}
\theoremstyle{definition}
\title{Secure Federated Clustering}
	\author[1,2]{Songze Li}
	\author[1]{Sizai Hou}
	\author[3]{Baturalp Buyukates}
	\author[3]{Salman Avestimehr}
	\affil[1]{IoT Thrust, The Hong Kong University of Science and Technology (Guangzhou)}
	\affil[2]{Department of Computer Science and Engineering, The Hong Kong University of Science and Technology}
	\affil[3]{ Department of Electrical and Computer Engineering, University of Southern California}
\date{}
\begin{document}

\maketitle

\begin{abstract}
We consider a foundational unsupervised learning task of $k$-means data clustering, in a federated learning (FL) setting consisting of a central server and many distributed clients. We develop SecFC, which is a secure federated clustering algorithm that simultaneously achieves 1) \emph{universal performance:} no performance loss compared with clustering over centralized data, regardless of data distribution across clients; 2) \emph{data privacy:} each client's private data and the cluster centers are not leaked to other clients and the server. In SecFC, the clients perform Lagrange encoding on their local data and share the coded data in an information-theoretically private manner; then leveraging the algebraic structure of the coding, the FL network exactly executes the Lloyd's $k$-means heuristic over the coded data to obtain the final clustering. Experiment results on synthetic and real datasets demonstrate the universally superior performance of SecFC for different data distributions across clients, and its computational practicality for various combinations of system parameters. Finally, we propose an extension of SecFC to further provide membership privacy for all data points. 
% SecFC achieves the same clustering accuracy as the Lloyd's algorithm on centralized data, and the additional overhead for providing data privacy is small.
\end{abstract}

\section{Introduction}
Federated learning (FL) is an increasingly popular distributed learning paradigm, which enables decentralized clients to collaborate on training an AI model over their private data collectively, without surrendering the private data to a central cloud. When running the FedAvg algorithm introduced in~\cite{mcmahan2017communication}, %having received a global model from a central server, 
each client trains a local model with its private data, and sends the local model to a central server for further aggregation into a global model. Since its introduction, FL has demonstrated great potential in improving the performance and security of a wide range of learning problems, including next-word prediction~\cite{yang2018applied}, recommender systems~\cite{jalalirad2019simple}, and predictive models for healthcare~\cite{rieke2020future}. 

In this work, we focus on a foundational unsupervised learning task - clustering. In an FL system, the clustering problem can be categorized into client clustering and data clustering. Motivated by heterogeneous data distribution across clients and the needs for training personalized models~\cite{t2020personalized,fallah2020personalized,li2021ditto}, client clustering algorithms are developed to partition the clients into different clusters to participate in training of different models~\cite{mansour2020three,ghosh2020efficient,ouyang2021clusterfl,kim2021dynamic}. On the other hand, data clustering problem aims to obtain a partition of data points distributed on FL clients, such that some clustering loss is minimized. In~\cite{chung2022federated}, a model-based approach UIFCA was proposed to train a generative model for each cluster, such that each data point can be classified into one of the clusters using the models. Another recent work~\cite{Dennis21a} considers the classical $k$-means clustering problem, and proposed an algorithm \kfed to approximately execute the Lloyd's heuristic~\cite{lloyd1982least} in the FL setting. \kfed is communication-efficient as it only requires one-shot communication from the clients to the server.

While being the state-of-the-art federated data clustering algorithms, both UIFCA and \kfed are observed to be limited in the following two aspects: 1) The clustering performance heavily depends on the data distribution across clients. Specifically, the clustering performance of \kfed is guaranteed only when the data points on each client are from at most $k' \leq \sqrt{k}$ clusters, and a non-empty cluster at each client has to contain a minimum number of points; 2) Leakage of data privacy. Just like other model-based FL algorithms, the data privacy of UIFCA is subject to model inversion attacks (see, e.g.,~\cite{zhu2019deep,geiping2020inverting,wang2019beyond,fowl2021robbing}). In \kfed, each client performs Lloyd's algorithm on local data, and sends the local cluster centers that are averages of private data points directly to the server, which allows a curious server to infer about the private dataset (e.g., number of local clusters, and the distribution of local data points).

Motivated by these limitations, we develop a secure federated clustering protocol named \scheme, for solving a $k$-means clustering problem over an FL network. \scheme builds upon the classical Lloyd's heuristic and has the following salient properties:
\begin{itemize}[leftmargin=*]
    \item {\bf Universal performance:} The clustering performance of \scheme matches exactly that of running Lloyd on the collection of all clients' data in a centralized manner, which is invariant to different data distributions at the clients;
    \item {\bf Information-theoretical data privacy:} Each client's private data is information-theoretically private from a certain number of colluding clients. The server does not know the values of the private data points and the cluster centers.
\end{itemize}
% To the best of our knowledge, \scheme is the first federated clustering algorithm with formal privacy guarantees. 

In the first phase of \scheme, each client locally generates some random noises and mixes them with its private data using Lagrange polynomial encoding~\cite{yu2019lagrange}, and then securely shares the coded data segments with the other clients. In the second phase, the clients and the server jointly execute the Lloyd's iterations over coded data. As each client obtains a coded version of the entire dataset in the first phase, leveraging the algebraic structure of the coding scheme, each client can compute a secret share of the distance between each pair of cluster center and data point. After receiving sufficient number of secret shares, the server is able to reveal the pair-wise distances between all cluster centers and all data points, so the center update can be carried out exactly as specified in the Lloyd's algorithm. Importantly, the server can only recover the distance between a cluster center and a data point, without knowing their respective locations, which is the key enabler for \scheme to achieve performance and privacy simultaneously. We further extend \scheme to strengthen it with membership privacy. Specifically, clients privately align the IDs of their local data using private set union~\cite{psu:1,psu:2,psu:3}, such that data clustering is performed without revealing which data point belongs to which client.

% Contributions:
% \begin{itemize}
%     \item Federated clustering paradigm based on well-known Lloyd heuristic
%     \item Suitable for federated analytics/learning
%     \item Client data and center information secure
%     \item Without sacrificing clustering accuracy compared to centralized clustering
%     \item Data heterogeneity does not degrade performance
%     \item Experiments on synthetic and real datasets for standard applications of $k$-means
% \end{itemize}

We conduct extensive experiments to cluster synthetic and real datasets on FL systems, and compare \scheme with the centralized Lloyd's algorithm and \kfed as our baselines. Under different data distributions across clients with different number of local clusters $k'$, \scheme consistently achieves almost identical performance as the centralized Lloyd, which outperforms \kfed. Also, unlike \kfed whose performance varies substantially with the value of $k'$, and deteriorates severely when $k'=k$, the performance of \scheme is almost invariant to the value of $k'$, indicating its universal superiority. We also evaluate the execution overheads of \scheme under various combinations of system parameters, which demonstrate the feasibility of applying \scheme to solve practical clustering tasks. 

% Our motivation, shortcomings of existing techniques. \\
% Federated analytics can be highlighted here.

% {\bf Notation.} For a positive integer $n$, let $[n]$ denote the set of integers $\{1,\ldots,n\}$. Let $|{\cal S}|$ denote the cardinality of a set ${\cal S}$.

\section{Background and Related Work}
Clustering is an extensively studied learning task. We discuss existing works in three categories.

% Following \cite{Dennis21a}, we group the existing works into three categories.
\begin{itemize}[leftmargin=*]
\item \textbf{Centralized clustering.}
In centralized clustering, a central entity, i.e., server, performs the data partition. A well-known centralized clustering problem is $k$-means clustering, where the aim is to find $k$ centers such that the Euclidean distance from each data point to its closest center is minimized. Lloyd's heuristic \cite{lloyd1982least} is a popular iterative approach in performing $k$-means clustering due to its simplicity, even though its performance heavily depends on algorithm initialization. Unlike the Lloyd's heuristic which does not provide any provable guarantees, Kanungo et al.~\cite{Kanungo03} proposes an approximation algorithm for $k$-means clustering that is based on swapping cluster centers in and out, which yields a solution that is at most $(9+\epsilon)$ factor larger than the optimal solution. Awasthi and Sheffet \cite{Awasthi12} propose a variant of the Lloyd's heuristic that is equipped with such an approximation algorithm to guarantee convergence in polynomial time. 
\item \textbf{Parallel and distributed clustering.} Distributed and parallel clustering implementations are particularly useful in the presence of large datasets that are prominent in many learning applications. The bottleneck in the centralized $k$-means clustering is the computation of distances among the data points and the centers. In parallel implementations of $k$-means clustering, the idea is to split the dataset into different worker machines so that the distance computations are performed in parallel \cite{Dhillon99, Joshi03}. In addition to $k$-means clustering, there have been parallel implementations of other clustering schemes such as the density-based clustering scheme DBSCAN \cite{Xu99}. When the dataset is distributed in nature, it may not be practical to send local datasets to the central entity. In such cases, distributed clustering is favorable albeit a performance loss compared to the centralized setting \cite{Kargupta01, Januzaj03, Balcan13}.  
\item \textbf{Federated clustering.} Clustering techniques have been used in FL systems to handle data heterogeneity  \cite{Smith17, ghosh2020efficient, Sattler20, Dennis21a, Balakrishnan21}, particularly focusing on personalized model training and client selection. References \cite{Smith17, Sattler20, ghosh2020efficient} focus on clustering the clients to jointly train multiple models, one for each cluster, whereas reference \cite{Dennis21a} clusters the client data points to reach more personalized models and perform a more representative client selection at each iteration. Unlike the $k$-means clustering which uses a heuristic to generate clusters, model-based clustering assumes that the data to be clustered comes from a probabilistic model and aims to recover that underlying distribution \cite{Mukherjee19, Liu20}. In their recent work \cite{chung2022federated}, using this approach, Chung et al. cluster heterogeneous client data in a federated setting by associating each data point with a cluster (model) with the highest likelihood. As also highlighted by \cite{Balakrishnan21}, common to all these existing works on $k$-means clustering is the fact that none of them provides \emph{data privacy}, in the sense that cluster center information as well as clients' data are kept private. This motivates us to develop \scheme in this work, which is to the best of our knowledge, the first federated clustering algorithm with formal privacy guarantees.  
% proposes a secure clustering scheme for federated learning where cluster center information as well as the client data kept private, which is the main focus of our work. 
\end{itemize}

Before we close this section, we briefly review some prior works on secure clustering.
% \begin{itemize}[leftmargin=*]
% \item 

\textbf{Privacy-preserving clustering.} When the clustering is performed on private client data as in the case of federated clustering, the issue of secure clustering emerges. The aim here is to cluster the entire dataset without revealing anything to the clients, including all intermediate values, but the output, i.e., final partition of data.  Privacy-preserving clustering has been studied in the literature employing cryptographic techniques from secure multiparty computation, for the cases of horizontal, vertical, and arbitrary partitions of data \cite{Vaidya03, Jagannathan05, Bunn07, Patel12, Yuan17, Mohassel19}. Among these works, we want to highlight \cite{Mohassel19}, where authors implement a secret sharing-based secure $k$-means clustering scheme. Unlike the proposed \scheme scheme, however, \cite{Mohassel19} employs an additive secret sharing scheme and reveals the final cluster centers to the participants. In \scheme, clients use Lagrange coding to encode and share their private data. During the execution of
\scheme, neither final nor any intermediate candidate center information is disclosed to the clients or the server, which is not the case in other earlier works \cite{Vaidya03, Jagannathan05,Patel12} in the literature as well.
% \end{itemize}

{\bf Notation.} For a positive integer $n$, let $[n]$ denote the set of integers $\{1,\ldots,n\}$. Let $|{\cal S}|$ denote the cardinality of a set ${\cal S}$.

\section{Preliminaries and Problem Formulation}\label{sec:formulation}

\subsection{$k$-means Clustering}
% Clustering is one of the most widely discussed unsupervised learning task. 
Given %a data matrix $\mathbf{A} \in \mathbb{R}^{m \times d}$, each of whose 
a dataset ${\bf X}$ consisting of $m$ data points ${\bf x}_1,\ldots,{\bf x}_m \in \mathbb{R}^d$, each with features of dimension $d$, the $k$-means clustering problem aims to obtain a $k$-partition of ${\bf X}$,
% where $n$ is the number of data points and each row $A_i$ corresponds to a $d$-dimensional data point. %The data is distributed across N clients. 
%Data points held by a client correspond to a subset of rows of $A$. 
% We want to partition $A$ into $K$ sets ($K \le n$), 
denoted by $ \mathcal{S} = \left( \mathcal{S}_1,\ldots,\mathcal{S}_k  \right) $, such that the following clustering loss is minimized. 
%The coded clustering is performed in a distributed manner similar to the federated learning setting. The same as in the standard $k$-means problem, the overall data is given as a matrix $A \in \mathbb{R}^{n \times d}$ with each $d$-dimensional data point as $A_i$. The algorithm involves $N+1$ parties of one server and $N$ clients. The data is distributed to $N$ clients randomly. The objective of the algorithm is to find a partition of $A$ with $K$ sets ($K \le n$), denoted by $ \mathbf{S} = \left \{ S_1,S_2,\ldots,S_K  \right \} $, such that it minimizes the following objective function,
\begin{equation}\label{obj_fcn}
    % \underset{\mathbf{S}}{\arg \min } 
L(\mathcal{S}) %L(\mathcal{S}_1,\mathcal{S}_2,\ldots,\mathcal{S}_k )
= \sum_{h=1}^{k} \sum_{\mathbf{x}_i \in {\cal S}_{h}}\left\|\mathbf{x}_i-\boldsymbol{\mu}_{h}\right\|_2^{2},
    % =\underset{\mathbf{S}}{\arg \min } \sum_{k=1}^{k}\left|S_{k}\right| \operatorname{Var} S_{k},
\end{equation}
where $\boldsymbol{\mu}_{h}$ denotes the center of cluster ${\cal S}_h$, which is computed as $\boldsymbol{\mu}_{h} = \frac{1}{\left | \mathcal{S}_{h} \right | }\sum_{\mathbf{x}_i \in \mathcal{S}_{h}} {\mathbf{x}_i}$. %We note that the optimization problem in (\ref{obj_fcn}) is identical to the standard centralized $k$-means clustering problem. 

A classical method for clustering is the Lloyd’s heuristic algorithm \cite{lloyd1982least}. %which is also referred to as naïve $k$-means. 
It is an iterative refinement method that alternates on two steps. The first step is to assign each data point to its nearest cluster center. %by utilizing a distance function (usually the Euclidean distance). 
The second step is to update each cluster center, as the mean of data points assigned to that cluster in the first step. The algorithm terminates when the data assignments no longer change. When the data points are well separated, Lloyd's algorithm often converges in a few iterations and achieves good performance~\cite{ostrovsky2013effectiveness}, but its complexity can grow superpolynomially in the worst case \cite{arthur2006slow}. Despite of many variants of Lloyd's algorithm to improve its clustering performance and computational complexity in general cases, we focus on solving the $k$-means clustering problem using the original Lloyd's heuristic due to its simplicity.

%{\textcolor{blue}{specifically when the final cluster centers positions fall into a certain condition.}} \songze{Worst case in what?}
% When the assignments no longer change, the algorithm has converged. 
% pros and cons 

% In practice when there is a structured cluster \songze{What does this mean?}, Lloyd's algorithm usually converges in several iterations, but its complexity can spike to a superpolynomial in the worst-case scenario \cite{arthur2006slow}. 

% It is well known that Lloyd's algorithm does not guarantee to produce the optimal clustering, and its performance relies heavily on the initialization method~\cite{celebi2013comparative}. Despite of many proposed variants of Lloyd's algorithm to improve clustering performance and computational complexity, the original Lloyd's heuristic continues to be popular in practice due to its simplicity.

\subsection{Secure Federated $k$-means}
We consider solving a $k$-means clustering problem, over a federated learning system that consists of a central server and $n$ distributed clients. The entire dataset consists of private data points arbitrarily distributed across the clients. For each $j \in [n]$, we denote the set of local data points at client $j$ as ${\cal D}_j$, $j \in [n]$. Specifically, we would like to run the Lloyd's algorithm to obtain a $k$-clustering of the clients' data, with the coordination of the server.

A similar problem was recently considered in~\cite{Dennis21a}, where a federated clustering protocol, $k$-FED, was proposed to approximately execute Lloyd's heuristic with one-shot communication from the clients to the server. The basic idea of $k$-FED is to have each client $j$ run Lloyd to generate a $k_j$-clustering of its local data ${\cal D}_j$, and send the cluster centers to the server, and then the server runs Lloyd again to cluster the centers into $k$ groups, generating a $k$-clustering of the entire dataset. We note that $k$-FED is limited in the following two aspects: 1) to run \kfed, each client $j$ needs to know the actual number of local clusters $k_j$, and the clustering performance is guaranteed only when the number of local clusters is less than $\sqrt{k}$, and the number of data points in each non-empty local cluster is large enough, which may hardly hold in practical scenarios; 2) the clients' private data is leaked to the server through the client centers directly sent to the server, which are computed as the means of the data points in respective local clusters.

%The work that is the most closely related to our work is \cite{Dennis21a}. In \cite{Dennis21a}, it proposes $k$-FED scheme to solve the challenge of unsupervised federated learning. 
% In this recent work, it claims on good clustering performance and high communication efficiency. They achieve one-round communication on similar performance with the centralized counterpart. Their algorithm requires a constrain on the number of clusters per device, denoted by $k'$ in \cite{Dennis21a}, where $k'\le \sqrt{k} $ as the boundary. Moreover, security is another important topic discussed in federated learning. In \cite{Dennis21a}, they have not considered any security measure to protect client's privacy. Although they claim that much fewer message is sent over the network, the communication reveals on client's local data and unprotected cluster center information. 

Motivated by the above limitations, we aim to design a federated $k$-means clustering algorithm that simultaneously achieves the following requirements.
\begin{itemize}[leftmargin=*]
\item {\bf Universal performance:} No loss of clustering performance compared with the centralized Lloyd's algorithm on the entire dataset, regardless of the data distribution across the clients;
\item {\bf Data $t$-privacy:} We consider a threat model where the clients and the server are \emph{honest-but-curious}. For some security parameter $t < n$, during the protocol execution, no information about a client's private data should be leaked, even when any subset of up to $t$ clients collude with each other. Further, the server should not know explicitly any private data of the clients, as well as the locations of the cluster centers.
\end{itemize}

% without losing any clustering performance comparing with centralized algorithm, preserving information-theoretic security of client data. And we do not have any constrain for the local cluster number such as $k'$.

% Comparison with $k$-FED: it is the recent paper, claims good clustering performance and communication efficiency. Problems in accuracy of clustering (still some loss), data security. ($k'$, $k$ relationship, what if they are equal). Each client sends their centers to the server. Our aim is to design a federated clustering algorithm without losing any clustering performance, preserving information-theoretic security of client data. We do not depend on $k'$. 

% For the privacy security, we analyze a threat model in which the clients and server are both honest but curious. We assume that up to $T$ out of $m$ clients can collude to infer data or the server can infer the clients' data and cluster information. The proposed scheme needs to guarantee that no information can be learnt in the information-theoretic sense, even if up to $T$ clients collude with each other. The server also learns no explicit information about the center vectors or the clients' data.

% That is, in this work, our goal is to perform clustering on the overall dataset without sacrificing privacy even though data is distributed across the clients and the server cannot know the individual data points as well as the cluster means. We achieve this through the proposed \scheme scheme that implements secret sharing to guarantee information-theoretic client data privacy.

We propose Secure Federated Clustering (or \scheme) that simultaneously satisfies the above performance and security requirements. In \scheme, private data is secret shared across clients such that information-theoretical privacy is guaranteed against up to $t$ colluding clients. Using computations over coded data, the server decodes pair-wise distances between the data points and the centers, which are sufficient to advance Lloyd iterations. In this way, \scheme exactly recovers the center updates of centralized Lloyd's algorithm, with the server knowing nothing about the data points and the centers. 

%Preliminaries:
%\begin{itemize}
%    \item Lloyd algorithm
%    \item Secret sharing
%\end{itemize}

%One shortcoming: One-shot paper uses a variant of Lloyd algorithm from Awasthi \& Sheffet (Algorithm 1 in that paper), which is a stronger algorithm with analytical guarantees on the number of correctly classified points. However, we base our approach on the simple Lloyd heuristic which does not provide such guarantees. \\

%We can argue that approximation based algorithms are centralised and cannot be implemented as they are in federated settings.

\section{The Proposed \scheme Protocol}
%Our technique:
%\begin{itemize}
%    \item We introduce our notation and method
%    \item Give our algorithm
%    \item Discuss its security, complexity
%    \item Communication, computation cost
%\end{itemize}

%In this section, we describe the proposed \scheme.  

% The coded clustering is performed in a distributed manner similar to the federated learning setting. The same as in the standard $k$-means problem, the overall data is given as a matrix $A \in \mathbb{R}^{n \times d}$ with each $d$-dimensional data point as $A_i$. The algorithm involves $n+1$ parties of one server and $N$ clients. The data is distributed to $N$ clients randomly. The objective of the algorithm is to find a partition of $A$ with $K$ sets ($K \le n$), denoted by $ \mathbf{S} = \left \{ S_1,S_2,\ldots,S_K  \right \} $, such that it minimizes the following objective function,
% \begin{equation}\label{obj_fcn}
%     \underset{\mathbf{S}}{\arg \min } \sum_{k=1}^{K} \sum_{A_i \in S_{k}}\left\|{A_i}-\boldsymbol{\mu}_{k}\right\|^{2}=\underset{\mathbf{S}}{\arg \min } \sum_{k=1}^{K}\left|S_{k}\right| \operatorname{Var} S_{k}.
% \end{equation}
% Here we denote the cluster center by $\boldsymbol{\mu}_{k}$, which can be calculated by $\frac{1}{\left | S_k \right | }\sum_{A_i \in S_{k}} {A_i}$. 

%We first start by defining some preliminary for the algorithm.
As \scheme leverages cryptographic primitives to protect data privacy, which operates on finite fields, we assume that %the dataset $\mathbf{X} \in \mathbb{F}_q^{m \times d}$
the elements of each data point are from a finite field $\mathbb{F}_q$ of order $q$. In practice, one can quantize each element of the data points onto $\mathbb{F}_q$, for some sufficiently large prime $q$, such that overflow does not occur to any computation during the execution of the Lloyd's algorithm. See Appendix~\ref{appendix1} for one such quantization method. 

\subsection{Protocol Description}\label{sec:protocol}

{\bf Secure data sharing.}
In the first phase of \scheme, each client mixes its local data with random noises to generate secret shares of the data, and communicates the shares with the other clients. 

For each $j \in [n]$, and each ${\bf x}_i \in {\cal D}_j$, client $j$ first evenly partitions ${\bf x}_i \in \mathbb{F}_q^d$ into $\ell$ segments ${\bf x}_{i,1},\ldots,{\bf x}_{i,\ell} \in \mathbb{F}_q^{\frac{d}{\ell}}$, for some design parameter $\ell$. Next, client $j$ randomly samples $t$ noise vectors ${\bf z}_{i,\ell+1},\ldots, {\bf z}_{i,\ell+t}$ from $\mathbb{F}_q^{\frac{d}{\ell}}$. Then, for a set of distinct parameters $\{\beta_1,\ldots,\beta_{\ell+t}\}$ from $\mathbb{F}_q$ that are agreed upon among all clients and the server, following the data encoding scheme of Lagrange Coded Computing (LCC) \cite{yu2019lagrange}, client $j$ performs Lagrange interpolation on the points $(\beta_1, {\bf x}_{i,1}),\ldots, (\beta_{\ell}, {\bf x}_{i,\ell}),(\beta_{\ell+1}, {\bf z}_{i,\ell+1}),\ldots,(\beta_{\ell+t}, {\bf z}_{i,\ell+t})$ to obtain the following polynomial
\begin{equation}
\label{encoding2}
{\bf f}_i(\alpha)=\sum\limits_{u=1}^{\ell}{{\bf x}}_{i,u} \cdot \prod_{v\in[\ell+t]\backslash\{u\}}\frac{\alpha-\beta_{v}}{\beta_{u}-\beta_{v}}
+\sum\limits_{u=\ell+1}^{\ell+t}{{\bf z}}_{i,u} \cdot \prod_{v\in[\ell+t]\backslash\{u\}}\frac{\alpha-\beta_{v}}{\beta_{u}-\beta_{v}}.
\end{equation}
Note that ${\bf f}_i(\beta_u) = {\bf x}_{i,u}$ for $u=1,\ldots,\ell$, and ${\bf f}_i(\beta_u) = {\bf z}_{i,u}$ for $u=\ell+1,\ldots,\ell+t$.

Finally, for another set of public parameters $\{\alpha_1,\ldots,\alpha_n\}$ that are pair-wise distinct, and $\{\beta_1,\ldots,\beta_{\ell+t}\} \cap \{\alpha_1,\ldots,\alpha_n\} = \emptyset$, client $j$ evaluates ${\bf f}_i(\alpha)$ at $\alpha_{j'}$ to compute a secret share $\widetilde{\mathbf{x}}_{i,j'}$ of ${\bf x}_i$ for client $j'$, for all $j' \in [n]$ (i.e., $\widetilde{\mathbf{x}}_{i,j'} = {\bf f}_i(\alpha_{j'})$), and sends $\widetilde{\mathbf{x}}_{i,j'}$ to client $j'$.

By the end of secure data sharing, each client $j$ possesses a secret share of the entire dataset ${\bf X}$, i.e., $\{\widetilde{\mathbf{x} }_{i,j}\}_{i=1}^m$.

% $\widetilde{ \mathbf{x} }_{i,j}$ of data ${\bf x}_i$ from the overall $\mathbf{X}$, regardless which ${\cal D}_j$ it belongs to.
% A thing to notice is that this stage is a one-time cost. The secure data sharing only runs once for the whole algorithm.

{\bf Coded center update.} %Similar to the original Lloyd's algorithm, 
The proposed \scheme runs in an iterative manner until the centers converge. The server starts by randomly sampling a $k$-clustering assignment $\mathcal{S}$. In the subsequent iterations, the computations of the distances from the data points to the centers, and the center updates are all performed on the secret shares of the original data.
%Since the shared data $\widetilde{ \mathbf{x} }_{i,j}$ is only related to $i,j$, the data ownership ${\cal D}_j$ would not be considered in the subsequent procedure. 

In each iteration, given the current clustering $({\cal S}_1,\ldots,{\cal S}_k)$, each client $j$ updates the secret shares of the centers as $\widetilde{\boldsymbol{\mu}}_{h,j}= \sum_{i \in \mathcal{S}_h}\widetilde{\mathbf{x}}_{i,j}$, for all $h \in [k]$. Then, for each $h \in [k]$ and each $i \in [m]$, client $j$ computes the \emph{coded} distance from data point $i$ to center $h$ as $\widetilde{d}_{i,h,j} = \left \| \widetilde{\boldsymbol{\mu}}_{h,j}-\left |  \mathcal{S}_h \right | \cdot \widetilde{\mathbf{x}}_{i,j} \right \|_2^2$, and sends it to the server. For each $(i,h) \in [m] \times [k]$, the coded distance $\widetilde{d}_{i,h,j}$ can be viewed as the evaluation of a polynomial $\phi_{i,h}(\alpha) = \left \| \sum_{i' \in \mathcal{S}_h} {\bf f}_{i'}(\alpha)- |\mathcal{S}_h| \cdot {\bf f}_i(\alpha) \right \|_2^2$ at point $\alpha_j$. As $\phi_{i,h}(\alpha)$ has degree $2(\ell+t-1)$, the server can interpolate $\phi_{i,h}(\alpha)$ from the computation results of $2\ell+2t-1$ clients.
% of the center $h \in [k]$, by summing the data shares within cluster $h$, i.e., client $j$ computes $\widetilde{\boldsymbol{\mu}}_{j,h}= \sum_{i \in \mathcal{S}_h}\widetilde{\mathbf{x}}_{i,j}$. 
% We represent the center of cluster $h$ through the sum since the polynomial arithmetic supports only the addition operation. Upon receiving the cluster assignment $\mathcal{S}$ from the server, each client computes the pairwise distances between the share of center $h$ and the share of data $i$ for every $h \in [k]$ and every $i \in [m]$, by $\widetilde{d}_{h,i,j} = \left \| \widetilde{\boldsymbol{\mu}}_{j,h}-\left |  \mathcal{S}_h \right | \cdot \widetilde{\mathbf{x}}_{i,j} \right \|_2^2 $, where $\left | \cdot  \right | $ stands for the cardinality of a set. %Since we use sum for cluster mean, we also multiply $\left |  \mathcal{S}_h \right |$ to $\widetilde{\mathbf{x}}_{i,j}$. 
% Client $j$ then sends $\widetilde{d}_{h,i,j}$ to the server. 
% Receiving the shares of pairwise distance from $n$ clients, the server reconstructs the polynomial that encodes the actual pairwise distance $d_{h,i}$ by Lagrange interpolation. Choosing $2t+2\ell-1$ submissions out of $n$ clients $\left \{ \widetilde{d}_{h,i,1},\widetilde{d}_{h,i,2},...\widetilde{d}_{h,i,n}\right \}$, the server interpolates and recovers a polynomial of degree $2t+2\ell-2$. 
% The encoded part of the degree $2t+2\ell-2$ polynomial is $d_{h,i}$. 
Having recovered $\phi_{i,h}(\alpha)$, the server computes $d'_{i,h} = \sum_{u=1}^{\ell} \phi_{i,h}(\beta_u) = \sum_{u=1}^{\ell} \left \| \sum_{i' \in \mathcal{S}_h} {\bf x}_{i',u}- |\mathcal{S}_h| \cdot {\bf x}_{i,u} \right \|_2^2$, and normalizes it by the square of the size of cluster $h$ to obtain the distance from data point $i$ to cluster center $h$, i.e., $d_{i,h} = \left \| \boldsymbol{\mu}_{h} - \mathbf{x}_i \right \|^2_2 = \frac{d'_{i,h}}{|{\cal S}_h|^2}$.\footnote{Here the division is performed in the real field.}
% Then, the server divides $d_{h,i}$ by $\left |  \mathcal{S}_h \right | ^2$ to compute the actual distance $\left \| \boldsymbol{\mu}_{h} - \mathbf{x}_i \right \| ^2_2$.
Having obtained the distances between all centers and all data points, the server updates the clustering assignment by first identifying the nearest center for each data point, and assigning the data point to the corresponding cluster. We summarize the proposed \scheme protocol in Algorithm~\ref{alg1}.

% This concludes a whole loop for the algorithm and it iterates until convergence or iteration upper bound is reached. 

\begin{algorithm}[h]
\caption{\scheme}
\label{alg1}
\KwData{Datasets ${\cal D}_1,\ldots,{\cal D}_n$ on $n$ clients}
\KwIn{Number of clusters $k$}
\KwOut{$k$-clustering $\mathcal{S} = \left( \mathcal{S}_1,\ldots,\mathcal{S}_k  \right ) $}
\emph{Phase 1: Secure data sharing}\;
\For{$j\leftarrow 1$ \KwTo $n$}{
\lForEach{${\bf x}_i \in D_j$}{Client $j$ generates a secret share $\widetilde{\mathbf{x}}_{i,j'}$, and sends it to client $j'$, $\forall j' \in [n]$}
}
% Each client: %dimensional reduction
% generate secret shares of $\mathbf{x}_i$, denoted by $\widetilde{\mathbf{x}}_{i,j}$, $j \in \left [ n \right ]$\, to distribute the data;
\emph{Phase 2: Coded center update}\;
Server randomly samples an initial clustering $\mathcal{S} $ and broadcasts to all clients\;

\While{$\mathcal{S} = \left( \mathcal{S}_1,\ldots,\mathcal{S}_k  \right )$ is different from last iteration}{
%\emph{Client-side operations}\;
\For{$j\leftarrow 1$ \KwTo $n$}{
\lForEach{$h \in [k]$}{Client $j$ updates local coded center $\widetilde{\boldsymbol{\mu}}_{h,j}= \sum_{i \in \mathcal{S}_h}\widetilde{\mathbf{x}}_{i,j}$}
% \lForEach{$(i,h) \in [m] \times [k]$}{Client $j$ computes a coded distance $\widetilde{d}_{i,h,j} = \left \| \widetilde{\boldsymbol{\mu}}_{h,j}-\left |  \mathcal{S}_h \right | \cdot \widetilde{\mathbf{x}}_{i,j} \right \|_2^2 $, and sends it to the server}
\For{$(i,h) \in [m] \times [k]$}{Client $j$ computes a coded distance $\widetilde{d}_{i,h,j} = \left \| \widetilde{\boldsymbol{\mu}}_{h,j}-\left |  \mathcal{S}_h \right | \cdot \widetilde{\mathbf{x}}_{i,j} \right \|_2^2$, and sends it to the server}
}

% Each client updates center shares by data shares: $\widetilde{\boldsymbol{\mu}}_{h,j}= \sum_{i \in \mathcal{S}_h}\widetilde{\mathbf{x}}_{i,j}$, $ j \in \left [ n \right ] ; h \in \left [ k \right ] $\;

% Each client computes the pair-wise distance between center shares and data shares: $\widetilde{d}_{i,h,j} = \left \| \widetilde{\boldsymbol{\mu}}_{j,h}-\left |  \mathcal{S}_h \right | \cdot \widetilde{\mathbf{x}}_{i,j} \right \|_2^2 $, where $h \in \left [ k \right ] , i \in \left [ m \right ], j \in \left [ n \right ] $ , and send the results to the server \;

\For{$(i,h) \in [m] \times [k]$}{Server recovers $d'_{i,h}=\left \| \boldsymbol{\mu}_{h}-\left |  \mathcal{S}_h \right | \cdot \mathbf{x}_{i} \right \|_2^2$ from $ (\widetilde{d}_{i,h,1},\ldots,\widetilde{d}_{i,h,n})$ via interpolating and evaluating a polynomial of degree $2t+2\ell-2$\;

Server computes the actual distance $d_{i,h} = \frac{  d'_{i,h} }{\left | \mathcal{S}_h \right | ^2} $ from ${\bf x}_i$ to $\boldsymbol{\mu}_{h}$ \;
}

% \lForEach{$(i,h) \in [m] \times [k]$}{Server recovers $d'_{i,h}$ from $\left \{ \widetilde{d}_{h,i,1},\widetilde{d}_{h,i,2},...\widetilde{d}_{h,i,n}\right \}$ by running $\left(2t+2\ell-2 \right)$-degree interpolation, where $h \in \left [ k \right ] , i \in \left [ m \right ]$}

% The server recovers $d'_{i,h}$ from $\left \{ \widetilde{d}_{h,i,1},\widetilde{d}_{h,i,2},...\widetilde{d}_{h,i,n}\right \}$ by running $\left(2t+2\ell-2 \right)$-degree interpolation, where $h \in \left [ k \right ] , i \in \left [ m \right ]$ \;

Server associates each data point to its nearest center and updates the clustering ${\cal S}$.
% The server updates the data assignment to centers: $\mathcal{S} = \arg\min_{\mathcal{S}} \sum_{h=1}^{k} \sum_{\mathbf{x}_i \in {\cal S}_{h}} d_{i,h} $ \;
}
\end{algorithm}
\vspace{-2mm}
% Practical benefits of our approach:

% \begin{itemize}
%     \item Secure
%     \item Independent of number of data at each client
%     \item Distribution of data does not change anything. 
% \end{itemize}

\subsection{Theoretical Analysis}\label{sec:theoretical}
% The data privacy is guaranteed by the $(N,T)$ Shamir's secret sharing scheme. It ensures that no party can have any knowledge on the original secret when the colluding parties are less than security parameter $T$. Therefore, when the clients only compute on the data share, the privacy of data is well protected. The server assumed is a curious but honest entity. It receives and decodes on the distance shares and have no knowledge on the actual data or cluster centers. The cluster privacy is well guaranteed by the algorithm. 
We theoretically analyze the performance, privacy, and complexity of the proposed \scheme protocol.

\begin{theorem}
\label{main_thm}
% Consider a secure federated $k$-means problem in with $n$ clients. 
For the federated clustering problem with $n$ clients, the proposed \scheme scheme with parameter $\ell$ achieves universal performance and data $t$-privacy, when $2t+2\ell -1 \le n$.
\end{theorem}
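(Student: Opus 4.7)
The plan is to prove the two claims of the theorem separately. For universal performance, I would argue that the server recovers exactly the same pairwise distances $\|\boldsymbol{\mu}_h - \mathbf{x}_i\|_2^2$ that the centralized Lloyd's algorithm would use, so given the same initial clustering $\mathcal{S}$ the two algorithms produce identical iterates. For data $t$-privacy, I would separately handle privacy against colluding clients (which reduces to a standard Shamir/LCC secret-sharing argument) and privacy against the server (which amounts to showing the server only ever learns pairwise distances, not the underlying data or centers).

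For the performance claim, the key algebraic facts are: (i) each encoding polynomial $\mathbf{f}_i(\alpha)$ in (\ref{encoding2}) has degree $\ell+t-1$, so $\phi_{i,h}(\alpha)=\|\sum_{i'\in\mathcal{S}_h}\mathbf{f}_{i'}(\alpha)-|\mathcal{S}_h|\cdot\mathbf{f}_i(\alpha)\|_2^2$ is a polynomial of degree at most $2(\ell+t-1)$; (ii) the $n$ client evaluations $\{\widetilde{d}_{i,h,j}=\phi_{i,h}(\alpha_j)\}_{j\in[n]}$ lie at distinct points $\alpha_j$, so when $n\ge 2\ell+2t-1$ the server can interpolate $\phi_{i,h}$ uniquely; (iii) since $\mathbf{f}_{i'}(\beta_u)=\mathbf{x}_{i',u}$ for $u\in[\ell]$, we get $\phi_{i,h}(\beta_u)=|\mathcal{S}_h|^2\|\boldsymbol{\mu}_h^{(u)}-\mathbf{x}_{i,u}\|_2^2$, where $\boldsymbol{\mu}_h^{(u)}$ is the $u$-th segment of the true center; summing over $u$ and dividing by $|\mathcal{S}_h|^2$ gives precisely $\|\boldsymbol{\mu}_h-\mathbf{x}_i\|_2^2$. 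An induction on the iteration index then shows that, starting from the same random clustering broadcast in Phase 2, \scheme and centralized Lloyd's share the same assignment at every step, so the final partitions coincide.

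For privacy against any colluding set $\mathcal{T}\subset[n]$ with $|\mathcal{T}|\le t$, I would show that for every data point $\mathbf{x}_i$ of an honest client, the tuple $(\widetilde{\mathbf{x}}_{i,j})_{j\in\mathcal{T}}=(\mathbf{f}_i(\alpha_j))_{j\in\mathcal{T}}$ is information-theoretically independent of $(\mathbf{x}_{i,1},\ldots,\mathbf{x}_{i,\ell})$. This reduces to the standard LCC argument: the map from $(\mathbf{z}_{i,\ell+1},\ldots,\mathbf{z}_{i,\ell+t})$ to $(\mathbf{f}_i(\alpha_j))_{j\in\mathcal{T}}$, for any fixed data segments, is an affine transformation whose linear part is a Vandermonde-type $t\times t$ matrix in the distinct evaluation points $\{\alpha_j\}_{j\in\mathcal{T}}$, hence invertible over $\mathbb{F}_q$. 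Since the $\mathbf{z}_{i,\ell+u}$ are uniform and independent, the shares are themselves uniform on $\mathbb{F}_q^{td/\ell}$ regardless of the data, proving $t$-privacy against client collusion.

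For privacy against the server, I would note that the server's entire view across the protocol consists of the clustering assignment $\mathcal{S}$ and, per iteration, the evaluations $\{\widetilde{d}_{i,h,j}\}$; these are a deterministic function of the polynomials $\phi_{i,h}$, which in turn depend on the data only through the distances $\{\|\boldsymbol{\mu}_h-\mathbf{x}_i\|_2^2\}_{(i,h)\in[m]\times[k]}$ (and segment-wise refinements thereof), so the server can decode only these quantities and not the underlying $\mathbf{x}_i$ or $\boldsymbol{\mu}_h$. The subtlest point, and the one I expect to be the main obstacle, is making this ``only these quantities'' statement precise: one must argue that knowledge of $\phi_{i,h}$ itself—a polynomial of degree $2(\ell+t-1)$—does not implicitly reveal the locations, because the noise coefficients $\mathbf{z}_{i,\ell+u}$ simulate the non-informative coefficients of $\phi_{i,h}$ and leave the server with a view that can be simulated from the distances alone. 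A clean way to package this is a simulator argument showing that, given just the sequence of distances produced in each iteration, one can sample a view distributed identically to the server's actual view, which certifies that no additional information about the $\mathbf{x}_i$ or $\boldsymbol{\mu}_h$ is leaked.
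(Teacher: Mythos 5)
Your proposal is correct and follows essentially the same route as the paper: universal performance via the degree count $\deg\phi_{i,h}=2(\ell+t-1)$, interpolation from $n\ge 2\ell+2t-1$ evaluations, recovery of $\sum_{u}\phi_{i,h}(\beta_u)=|\mathcal{S}_h|^2\|\boldsymbol{\mu}_h-\mathbf{x}_i\|_2^2$, and induction over iterations; client-side $t$-privacy via the LCC secret-sharing guarantee (which you prove directly with the invertible affine-map argument rather than citing Theorem~1 of the LCC paper); and server privacy via the observation that only pairwise distances are decodable. On the last point you are in fact more careful than the paper, which simply asserts that the server learns only the distances, whereas you correctly note that the recovered $\phi_{i,h}$ reveals segment-wise distances and that a rigorous statement requires a simulator-style argument showing the remaining coefficients are masked by the noise.
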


\begin{proof}
The data $t$-privacy of \scheme against colluding clients follows from the $t$-privacy of LCC encoding. As shown in Theorem~1 of \cite{yu2019lagrange}, each data point ${\bf x}_i$ is information-theoretical security against $t$ colluding clients. In other words, for any subset ${\cal C} \subset [n]$ of size not larger than $t$, the mutual information $I({\bf x}_i;(\widetilde{{\bf x}}_{i,j})_{j \in {\cal C}}) = 0$.

As explained in the above protocol description, for a given clustering $({\cal S}_1, \ldots, {\cal S}_k)$, as long as $n \ge 2 \ell+2t-1$, the server is able to exactly reconstruct the polynomial $\phi_{i,h}(\alpha)$ from the computation results of the $n$ clients, and subsequently compute the exact distance $d_{i,h}$ from data point ${\bf x}_i$ to the cluster center $\boldsymbol{\mu}_{h} = \frac{\sum_{{\bf x}_{i'} \in {\cal S}_h} {\bf x}_{i'}}{|{\cal S}_h|}$. Therefore, the clustering update of \scheme exactly follows that of a centralized Lloyd's algorithm, and achieves identical clustering performance. It is easy to see that this performance is achieved universally regardless of the clients' knowledge about the numbers of local clusters, and the distribution of data points.

Finally, with the computation results received from the clients, the server is only able to decode the distances between the data points and the current cluster centers, but not their individual values. This achieves data privacy against a curious server. 
% {\it Data privacy part 3.} For data privacy on center, we need to show the server does not know explicitly any private data of the clients, as well as the locations of the cluster centers. From the knowledge in proof of {\it Universal optimality}. The server has information of encoding polynomial $g_{h,i}(\alpha)$, agreed mutual information, and nothing else. The polynomial $g_{h,i}(\alpha)$ only contains information about distance $d_{h,i}$ between each center $h$ and data point $\mathbf{x}_i$. It gives no explicit information on center position or private data of clients. 
% {\it Data privacy part 1.} and {\it Data privacy part 3.} conclude for the Data privacy in theorem. {\it Universal optimality part 2.} concludes for the Universal optimality in theorem. All three parts conclude for the whole theorem.
\end{proof}

\begin{remark}
Taking advantage of a key property of the Lloyd's algorithm where only the distances between points and centers are needed to carry out the center update step, \scheme is able to protect clients' data privacy against the server, without sacrificing any clustering performance.
\end{remark}

\begin{remark}\label{rmk:performance}
\scheme achieves the same clustering performance as the original Lloyd's algorithm, which is nevertheless not guaranteed to produce the optimal clustering, and its performance relies heavily on the clustering initialization~\cite{celebi2013comparative}. It was proposed in~\cite{Awasthi12} to improve center initialization by first projecting the dataset onto the subspace spanned by its top $k$ singular vectors, and applying an approximation algorithm (see, e.g.,~\cite{Kanungo03,har2004coresets}) on the projected dataset to obtain the initial centers. How one can incorporate these initialization tricks into \scheme without compromising data privacy remains a compelling research question.
\end{remark}

%\begin{remark}
Theorem \ref{main_thm} reveals a trade-off between privacy and efficiency for \scheme. As a larger $t$ allows for data privacy against more curious colluding clients, a larger $\ell$ reduces both the communication loads among the clients, and the computation complexities at the clients and the server.
% The total communication cost is proportional to $\frac{1}{\ell}$. Theorem \ref{main_thm} provides a trade-off between the security level $t$ and the coding efficiency. \scheme scheme can achieve higher privacy guarantee against more colluding parties by increasing the communication cost, vice versa. 
% Moreover, when we remove the $2$-norm of distance computation in algorithm \ref{alg1} and increase the communication overhead, the boundary in theorem \ref{main_thm} can be further relaxed to $t+\ell \le n$. 
%\end{remark}

{\bf Complexity analysis.}
We analyze the communication and computation complexities of \scheme. Let $m_j = |{\cal D}_j|$ be the number local data points at client $j$, and $s$ be the number of iterations. 

% scheme with respect to the number of users $n$, the number of data points $m$, the dimension of data $d$, the security parameter $t$, and the LCC constant $\ell$. In data sharing stage, the clients perform computation in parallel. Let $\hat{m}$ denote the maximum number of data points held by a client, where $\hat{m} < m$ and $s$ denote the total number of iterations.

\textit{Communication complexity of client $j$.} The communication load of client $j$ consists of two parts: 1) The communication cost of client $j$ for data sharing is $\mathcal{O}(\frac{m_jdn}{\ell})$; 2) In each iteration, client $j$ communicates $km$ computation results to the server, incurring a cumulative communication cost of $\mathcal{O}(kms)$. %3) The cost of cluster assignment is $\mathcal{O}(m)$. 
The total communication cost of client $j$ during the execution of \scheme is $\mathcal{O}(\frac{m_jdn}{\ell} + kms)$.

\textit{Computation complexity of client $j$.} The computation performed by client $j$ consists of two parts: 1) Utilizing fast polynomial interpolation and evaluation~\cite{kedlaya2011fast}, client $j$ can generate the secret shares of its local data with complexity 
% The secure data sharing has a computation cost of
$\mathcal{O}(\frac{m_j d}{\ell} n \log^{2}n)$; 
%from secret share generating
2) Within each iteration, client $j$ needs to first compute a secret share for each of the $k$ centers, and then compute the distance from each data point to each center in the coded domain. This takes a total of $\mathcal{O}(\frac{kmds}{\ell})$ operations. Hence, the total computation complexity of client $j$ is $\mathcal{O}(\frac{m_j d}{\ell} n \log^{2}n + \frac{kmds}{\ell})$.

% The center shares aggregation computation requires $\mathcal{O}(m\lceil \frac{d}{\ell} \rceil )$ additions. 3) The distance computations requires $\mathcal{O}(m\lceil \frac{d}{\ell} \rceil )$ multiplications and $\mathcal{O}(m\lceil \frac{d}{\ell} \rceil )$ additions. Consider 1) is a one-time cost and 2),3) happens in one iteration, the computation complexity of the clients is $\mathcal{O}(\hat{m}\lceil \frac{d}{\ell} \rceil n\log^{2}n)$ and $\mathcal{O}(ms \lceil \frac{d}{\ell} \rceil)$.

\textit{Computation complexity of the server.} The computation occurs on server in each iteration consists of two parts. 1) The decoding and recovery of actual pair-wise distances take $\mathcal{O}(km(\ell+t)\log^{2}(\ell+t))$ operations; 2) The cost of cluster assignment on each data point is $\mathcal{O}(k)$. Thus, the total computation complexity of the server is $\mathcal{O}(kms(\ell+t)\log^{2}(\ell+t))$.

% Combining these together, the proposed \scheme scheme has a computation complexity of $\mathcal{O}(\hat{m} \lceil \frac{d}{\ell} \rceil n\log^{2}n + ms( \lceil \frac{d}{\ell} \rceil+kn\log^{2}n))$ and a communication complexity of $\mathcal{O}(m \lceil \frac{d}{\ell} \rceil)$.

\section{Experiments}
% Here, our aim is to show that we obtain centralized $k$-means performance and beat distributed approaches even though we do federated clustering.

% We should be doing better than any approximate algorithm.

% % We need to implement $k$-FED?

% We can have a plot showing $k'$ vs performance to showcase that $k$-FED is getting worse, we stay the same. Our scheme does not depend on $k'$.

% In this section,  we demonstrate the performance of \scheme in comparison with centralized Lloyd’s heuristic and the federated clustering protocol $k$-FED, which is a similar counterpart to our work.

In this section, we empirically evaluate the performance and complexity of the proposed \scheme protocol, for federated clustering of synthetic and real datasets. We compare \scheme with centralized implementation of Lloyd’s heuristic and the $k$-FED protocol as baselines. All experiments are carried out on a machine using Intel(R) Xeon(R) Gold 5118 CPU @ 2.30GHz, with 12 cores of 48 threads.  

% demonstrate the performance of \scheme in comparison with centralized Lloyd’s heuristic and the federated clustering protocol $k$-FED, which is a similar counterpart to our work.

% So far, we have measured the clustering performance by the clustering loss given in (\ref{obj_fcn}). In our experiments, to compare the performance of \scheme with the baselines, we perform \emph{accuracy evaluation}. In an unsupervised $k$-means clustering task, we use ground truth labels that are provided with the dataset to measure accuracy. Different from supervised learning, these labels indicate a partition of the dataset. Let $y_i$ and $\hat{y}_i$, $i\in [m]$ denote the ground truth labels and the predicted labels, respectively such that $y_i,\hat{y}_i \in \left \{ 1,\dots,  k \right \} $. Each clustering induces a permutation $\pi(\left [ 1,\dots,  k \right ] )$ of $\left [ 1,\dots,  k \right ]$. We use the Hungarian method \cite{kuhn1955hungarian} to find a mapping $\pi$, such that $ \sum_{i \in [m]} \mathbbm{1}\left ( y_i,\pi(\hat{y}_i)  \right ) $ is maximized, where $\mathbbm{1}(y_i=\hat{y}_i)$ is the indicator function. Then, the accuracy of a $k$-means clustering algorithm is given by $ \frac{1}{m}\sum_{i \in [m]} \mathbbm{1}\left ( y_i,\pi(\hat{y}_i)  \right ) $.

% \songze{Describe center separation trick}
The \emph{center separation} technique~\cite{Awasthi12} is applied to all three algorithms, for clustering initialization to improve the performance. It is a one-time method to generate $k$ groups of data ${\cal S}'_1, \ldots, {\cal S}'_k$ from a given clustering assignment. Given the current cluster centers $\boldsymbol{\mu}_{1}, \ldots, \boldsymbol{\mu}_{k} $, each $S'_h$ is selected such that $S'_{h} \leftarrow\left\{i:\left\| {\bf x}_i-\boldsymbol{\mu}_{h} \right\|_{2} \leqslant \frac{1}{3}\left\|{\bf x}_i-\boldsymbol{\mu}_{s} \right\|_{2}, \forall s \in [k]\right\}$. In this way, the data in ${\cal S}'_h$ is close to center $\boldsymbol{\mu}_{h}$ and far from all other centers. Then, the means of ${\cal S}'_1, \ldots, {\cal S}'_k$ are used as the initial centers for the subsequent iterative execution of the algorithms.
%The clustering algorithms will benefit from this separated centers and have better performance. 

We cluster data points with ground truth labels, and use \emph{classification accuracy} as the performance measure. For each $i \in [m]$, let $y_i$ and $y'_i$ respectively denote the ground truth label of data point ${\bf x}_i$, and the index of the cluster it belongs to after the clustering process, such that $y_i,y'_i \in [k] $. As the clustering operation does not directly predict ${\bf x}_i$'s label, the classification of ${\bf x}_i$ is subject to a potential permutation $\pi$ of the label set $[k]$. 
% Each clustering induces a permutation $\pi(\left [ 1,\dots,  k \right ] )$ of $\left [ 1,\dots,  k \right ]$. 
We use the Hungarian method \cite{kuhn1955hungarian} to find the optimal permutation $\pi_0$ that maximizes $ \sum_{i \in [m]} \mathbbm{1}\left ( y_i =\pi(y'_i)  \right)$, where $\mathbbm{1}(\cdot)$ is the indicator function. Then, the accuracy of a $k$-means clustering algorithm is given by $ \frac{1}{m}\sum_{i \in [m]} \mathbbm{1}\left ( y_i = \pi_0(y'_i)  \right ) $.

\subsection{Separating Mixture of Gaussians}
We first consider a synthetic dataset, where the data is generated by a mixture of $k$ Gaussians. That is, we generate isotropic Gaussian clusters comprised of a total of $m$ data points each with $d$ features. Each data point $\mathbf{x}_i$ is sampled from an average of normal distributions with density $p(\mathbf{x})=\sum_{h=1}^{k} \frac{1}{k}  \mathcal{N}\left(\boldsymbol{\mu}_h, \boldsymbol{\Sigma}_h \right)$. %where we take mixing weights to be uniform. 
The centers $\{\boldsymbol{\mu}_h\}_{h=1}^k$ are randomly chosen. The covariance matrix $\bold{\Sigma}_h$ is diagonal with covariance $\sigma_h^2$. 
% since the Gaussian clusters are isotropic, and every entry on diagonal is square of $\sigma_h$.
The standard deviations are the same for all $h$, i.e., $\sigma_h = \sigma$, $ h \in [k]$. The ground truth label of each data point %i.e., the cluster of each data point, 
is given by the index of the closest center to that particular point. 
% No constraint is applied to separate the real cluster mean $\mu$, but we do keep the synthetic dataset for different algorithms identical in each run. The same accuracy evaluation is used for all algorithms.

% As in $k$-FED implementation provided by \cite{Dennis21a} \footnote{http://github.com/metastableB/kfed/}, the center separation by \cite{Awasthi12} is used to guarantee better convergence. For fair comparison, we also use the same center separation technique to generate separated center in centralized Lloyd and SecFC. The same accuracy evaluation is used for all algorithms.

To compare \scheme with $k$-FED, we generate clients' datasets with different $k'$, where the parameter $k'$ is the maximum number of clusters a client can have data points in and indicates the level of data heterogeneity across clients. The $k$-FED scheme assumes that this number is known to the clients \cite{Dennis21a}, which is not required for our \scheme. We generate local datasets for the cases of $k' \leq \sqrt{k}$, which is required by \kfed to have performance guarantees, and also for the case of $k'=k$, which corresponds to i.i.d. data distribution across clients.

% controls the level of heterogeneity across local client datasets. More formally, $k'$ is equal to the maximum number of clusters a client can have data points in. The $k$-FED scheme assumes that this number is known to the clients \cite{Dennis21a}. In Table~\ref{table:againstk}, we consider three possible distribution cases of client data. \cite{Dennis21a} sets $k'=\sqrt{k}$ whereas $k'=k$ corresponds to independent and identically distributed (i.i.d.) data across all clients.

We consider two Gaussian mixtures of dimension $d=100$, with standard deviation $\sigma=1$ and $\sigma=20$ respectively. For each value of $\sigma$, we consider a setting of $k=4$ clusters, $n=10$ clients, and $m=10000$ data points are generated; and another setting of $k=16$ clusters, $n=16$ clients, and $m=16384$ data points are generated. For each of the datasets, the centralized Lloyd, \kfed, and \scheme are executed for $10$ runs, and their performance are shown in Table~\ref{table:accuracy1} and Table~\ref{table:accuracy20}.

% First synthetic dataset is generated for $k=4$ case by setting $m=10000, n=10, d=100, \sigma=1$. For $k=16$ case, we take $m=16384, n=16$ to satisfy different $k'$ data distribution. \\
% \textcolor{blue} {for now, I'm listing the result, which can be rearranged in other way}
\begin{table}[h]
\centering
\caption{Clustering accuracy (\%) for a Gaussian mixture with $\sigma=1$.}
\label{table:accuracy1}
\resizebox{\columnwidth}{!}{%
\begin{tabular}{|c|ccc|ccc|}
\hline
                  & \multicolumn{3}{c|}{$k = 4$} & \multicolumn{3}{c|}{$k = 16$} \\ \cline{2-7} 
                  &$k'=1$ &$k'=2$ &$k'=4$ &$k'=2$ &$k'=4$ &$k'=16$ \\ \hline
Centralized Lloyd &$100.0\pm 0.0 $ &$100.0\pm 0.0 $ &$100.0\pm 0.0 $ & $88.2\pm 5.5$& $88.2\pm 5.5$& $88.2\pm 5.5$\\
$k$-FED             &$75.0\pm 0.1$&$99.7\pm 2.1$&$25.8\pm 0.6$&$80.3\pm 5.3$&$78.7\pm 3.7$&$7.6\pm 0.1$\\
SecFC             &$100.0\pm 0.0 $&$100.0\pm 0.0 $&$100.0\pm 0.0 $&$86.0\pm 1.5$&$86.0\pm 1.5$&$86.0\pm 1.5$\\
\hline
\end{tabular}%
}
\end{table}

% \begin{table}[h]
% \centering
% \begin{tabular}{c|ccc}
%   mean (min-max)   & Centralised Lloyd & $k$-FED           & SecFC \\ \hline
% $k = 4, k'=1$ & $100.0\pm 0.0 $ & $75.0\pm 0.1$        &  $100.0\pm 0.0 $    \\
% $k = 4, k'=2$ & $100.0\pm 0.0 $ &  $99.7\pm 2.1$ &  $100.0\pm 0.0 $  \\
% $k = 4, k'=4$ & $100.0\pm 0.0 $   & $25.8\pm 0.6$ &  $100.0\pm 0.0 $    \\
% $k = 16, k'=2$ &   $88.2\pm 5.5$ &   $80.3\pm 5.3$&   $86.0\pm 1.5$   \\  
% $k = 16, k'=4$ &  $88.2\pm 5.5$&  $78.7\pm 3.7$    &   $86.0\pm 1.5$   \\ 
% $k = 16, k'=16$&  $88.2\pm 5.5$ &   $7.6\pm 0.1$ & $86.0\pm 1.5$  
% \end{tabular}
% \caption{Average Accuracy $(\sigma=1)$}
% \label{table:againstk}
% \end{table}

% \begin{table}[h]
% \centering
% \begin{tabular}{c|ccc}
%   mean (min-max)   & Centralised Lloyd & k-FED           & SecFC \\ \hline
% $k = 4, k'=1$ & $96.3\pm 0.1$   & $46.7\pm 0.0$ &  $96.3\pm 0.0$   \\
% $k = 4, k'=2$ & $96.3\pm 0.1$    & $86.6\pm 5.5$&  $96.3\pm 0.0$   \\
% $k = 4, k'=4$ & $96.3\pm 0.1$    & $25.6\pm 0.1$     & $96.3\pm 0.0$  \\
% $k = 16, k'=2$ &   $84.0\pm 0.2$&  $42.0\pm 2.2$  &   $84.0\pm 0.1$  \\
% $k = 16, k'=4$ &  $84.0\pm 0.2$ & $41.7\pm 1.2$ &  $84.0\pm 0.1$    \\
% $k = 16, k'=16$ &  $84.0\pm 0.2$  &    $8.4\pm 0.1$     &  $84.0\pm 0.1$   
% \end{tabular}
% \caption{Average Accuracy $(\sigma=20)$}
% \label{table:againstk}
% \end{table}

For each value of $\sigma$, and each combination of $k$ and $k'$, \scheme consistently achieves almost identical performance with that of the centralized Lloyd, which is better than the accuracy of \kfed. For a fixed $k$, the accuracy of \scheme almost stays the same for different values of $k'$, demonstrating its universal superiority. In sharp contrast, the performance of \kfed varies significantly with $k'$, and drops drastically for the case of $k'=k$. With a larger $\sigma$, the Gaussian clusters are more scattered, hence increasing the difficulty for clustering algorithms to correctly classify data points. This is empirically verified by the performance drop when increasing $\sigma$ from $1$ to $20$, for all three algorithms.

\begin{table}[t]
\centering
\caption{Clustering accuracy (\%) for a Gaussian mixture with $\sigma=20$.}
\label{table:accuracy20}
\resizebox{\columnwidth}{!}{%
\begin{tabular}{|c|ccc|ccc|}
\hline
                  & \multicolumn{3}{c|}{$k = 4$} & \multicolumn{3}{c|}{$k = 16$} \\ \cline{2-7} 
                  &$k'=1$ &$k'=2$ &$k'=4$ &$k'=2$ &$k'=4$ &$k'=16$ \\ \hline
Centralized Lloyd &$96.3\pm 0.1$ &$96.3\pm 0.1$ &$96.3\pm 0.1$ &$84.0\pm 0.2$&$84.0\pm 0.2$&$84.0\pm 0.2$\\
$k$-FED             &$46.7\pm 0.0$&$86.6\pm 5.5$&$25.6\pm 0.1$&$42.0\pm 2.2$&$41.7\pm 1.2$&$8.4\pm 0.1$\\
SecFC             &$96.3\pm 0.0$&$96.3\pm 0.0$&$96.3\pm 0.0$&$84.0\pm 0.1$&$84.0\pm 0.1$&$84.0\pm 0.1$\\\hline
\end{tabular}%
}
\vspace{-5mm}
\end{table}

% A pair of clusters get closer to each other and even overlap a small portion. Thus, having a larger $\sigma$ increases the difficulty for clustering algorithms to correctly group data points. Here, we increase $\sigma$ to $20$ for the second dataset with all other parameters fixed. We observe a drop in accuracy for each algorithm, where the biggest drop is observed in the case of \kfed. 

% In practice, the dataset is generated in real domain, in which case we apply a quantization function (see Appendix~\ref{appendix1}) to map the data from a real domain to a finite field. As a result of this quantization, \scheme experiences some accuracy loss compared to the centralized Lloyd. In addition, there might be some minor difference in the accuracy performance of the two due to the fact that both centralized Lloyd and \scheme utilizes random initialization of centers. 
% %Moreover, in Lloyd's heuristic and \scheme, random initialization is used, but in $k$-FED, a more careful initialization is used to improve the clustering performance. This leads to some loss between \scheme and Lloyd.

{\bf Complexity evaluation.} 
We also evaluate the execution overheads of \scheme, under different combinations of system parameters. By default, the clustering is done for $k=4$ clusters on $n=10$ clients, for a dataset generated by parameters $m=1000$, $d=20$, and $\sigma=1$. For \scheme, as the computations at both the clients and the server are independent across centers and data points, we parallelize the computations onto different processes to speed up the execution of \scheme. Based on the capacity of the system's computing power, the distance computations of the $m$ data points at each client are parallelized onto multiple processes, and the distance decodings of the $mk$ (data, center) pairs at the server are also parallelized onto multiple processes.
% Specifically, the client computation is parallelized across the $m$ data points, and the server computation is parallized across the $mk$ (data, center) pairs.
% The paralleled loop for client is across different data with respect to the number of $m$, and for server is across different data and different cluster with respect to the number of $mk$. 
We present the run times of \kfed and \scheme, as functions of the parameters $n$, $d$, and $m$ respectively in Figure~\ref{fig:runtime}. Here we measure the average run time of \scheme spent in each client-server iteration, and the overall run time of \kfed as it performs one-shot communication from clients to the server. We omit the communication times in all scenarios as they are negligible compared with computation times. 

% During implementation, we notice \scheme can compute parallel on all clients and server. So, we develop parallelization for client distance computation and server distance decoding. The implemented parallelization detects the number of cores on the machine, which in our case is 12, and allocates the provided function from a given loop onto detected cores. The paralleled loop for client is across different data with respect to the number of $m$, and for server is across different data and different cluster with respect to the number of $mk$. 

% Even though \scheme achieves the centralized clustering performance, preserving privacy comes with a cost. 
% We test the run time of two distributed algorithms on the synthetic dataset and show the breakdown of each algorithm in Figure~\ref{fig:runtime}. The synthetic dataset is generated by parameter $m=1000, d=20,\sigma=1$ with problem setting of $k=4,n=10$ as a default. Since the convergence time depends on the iteration number, we measure the time by averaging time spent in each iteration. During implementation, we notice \scheme can compute parallel on both clients and server. So, we develop parallelization for client distance computation and server distance decoding. Even though \scheme is slower than insecure $k$-FED, \scheme is still practical with a manageable time consumption. 

We observe in Figure~\ref{fig:runtime} that, as expected, \scheme is slower than \kfed (in fact, one iteration of \scheme is already slower than the entire execution of \kfed). However, the execution time of \scheme in each iteration is rather short ($<16 s$ in the worst case), which demonstrates its potential for practical secure clustering tasks.
% Even though \scheme is slower than insecure $k$-FED, \scheme is still practical with a manageable time consumption.
For different client number $n$, the security parameter $t$ is kept to be $t = \lceil\frac{1}{3}n\rceil$ to provide privacy against up to $\frac{n}{3}$ colluding clients. The client run time is independent of $n$ because all clients compute on the coded data of the entire dataset in \scheme. Server complexity increases slightly with $n$ as the increased $t$ contributes to a higher decoding complexity. For different data dimension $d$, the server run time remains almost unchanged as it only decodes the distance between data points and centers. The run times of the clients in \scheme and the overall run time of \kfed both increase with $d$ as they are operating on data points with a higher dimension. Finally, when the number of data points $m$ is increased, the run times of server and clients in \scheme, and the overall run time of \kfed all grow almost linearly.
% is independent from the choice of $d$. 
% Since the clients perform vector based computation by implementation, the run time for clients increase little when dimension $d$ increases.
% For data number $m$, we test it from $1000$ to $16000$. The run time for both client and server grows proportionally to the data number.

\begin{figure}[h]
\vspace{-3mm}
  \includegraphics[width=\linewidth]{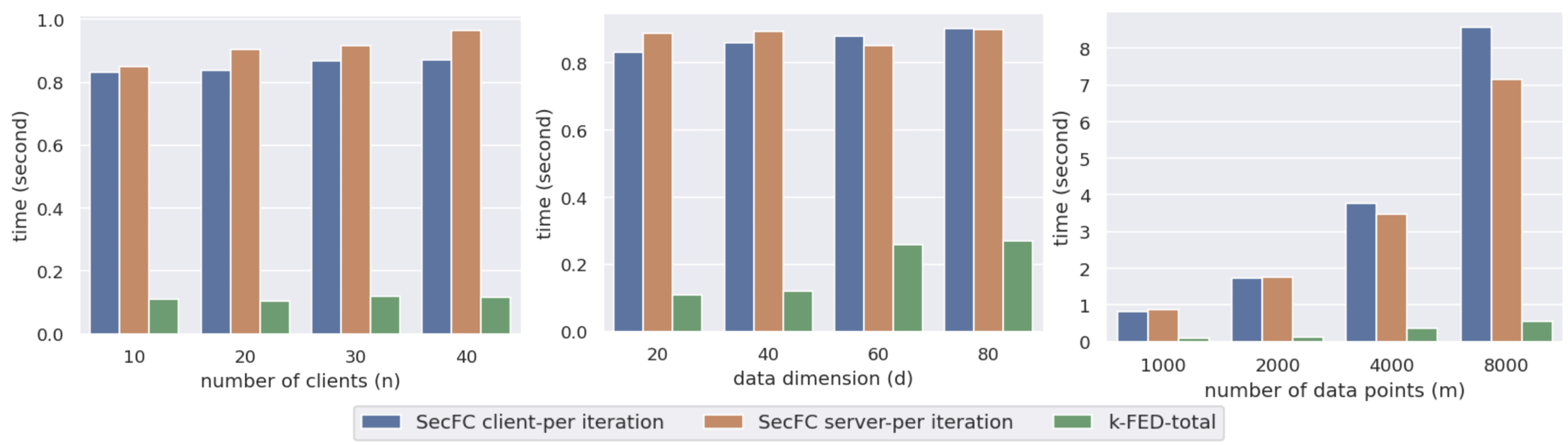}
 \vspace{-3mm} 
\caption{Run time of \scheme and \kfed under different system parameters.}\label{fig:runtime}
\end{figure}

% \begin{table}[h]
% \begin{tabular}{c|cc|ccc|}
%  & k-FED                       &        &                                   & SecFC                       &        \\ \cline{2-6} 
%  & \multicolumn{1}{c|}{Client} & Server & \multicolumn{1}{c|}{Data Sharing} & \multicolumn{1}{c|}{Client} & Server \\
%  & \multicolumn{1}{c|}{0.7}    & 0.01   & \multicolumn{1}{c|}{0.6}          & \multicolumn{1}{c|}{6.2}   & 99.1  
% \end{tabular}
% \caption{Runtime}
% \label{table:runtime}
% \end{table}

\vspace{-5mm}
\subsection{Experiments on Real Data}
We also evaluate the clustering performance of \scheme on the MNIST handwritten digit dataset~\cite{mnist}. As in \cite{chung2022federated}, we use rotations to construct a clustered dataset such that different rotation angles correspond to data generated by different distributions. For the set of images in MNIST labelled by a certain digit, we rotate the images by a degree of $0, 90,180,270$ respectively to construct a dataset with $k=4$ clusters. We execute \scheme and the two baseline algorithms on the datasets generated for digits $2$ and $3$ over $n=10$ clients, and present their performance in Table~\ref{table:mnistrotate}.

% By selecting certain digits such as $2,4$ from MNIST, we construct a dataset by applying rotation of degree $0, 90,180,270$ to the selected sub-dataset.\footnote{In our simulations on rotated MNIST, we exclude digits like $0$ and $1$ that cause confusion in rotation recognition in all three algorithms.} $k$ is $4$, one cluster for each rotation, and we consider three data distribution cases, i.e., we consider three distinct $k'$ values. Client number is set to $k=4$. 

\begin{table}[h]
\centering
\caption{Clustering accuracy (\%) for Rotated MNIST datasets.}
\label{table:mnistrotate}
\resizebox{\columnwidth}{!}{%
\begin{tabular}{|c|ccc|ccc|}
\hline
                  & \multicolumn{3}{c|}{Digit $2$} & \multicolumn{3}{c|}{Digit $3$} \\ \hline
$k=4$               &$k'=1$ &$k'=2$ &$k'=4$       &$k'=1$ &$k'=2$ &$k'=4$        \\ \hline
Centralized Lloyd &$99.5\pm 0.1$&$99.5\pm 0.1$&$99.5\pm 0.1$& $98.1\pm 0.0$&$98.1\pm 0.0$&$98.1\pm 0.0$\\
$k$-FED             &$50.5\pm 0.0$ &$90.3\pm 10.4$&$25.8\pm 0.1$&$97.7\pm 0.0$&      $98.1\pm0.0$   &$68.2\pm 9.4$\\
SecFC             &$99.5\pm 0.1$&$99.5\pm 0.1$&$99.5\pm 0.1$&$98.1\pm 0.0$&$98.1\pm 0.0$&$98.1\pm 0.0$\\\hline
\end{tabular}%
}
\vspace{-5mm}
\end{table}

Similar to the synthetic data, \scheme achieves almost identical performance with centralized Lloyd in all tasks of clustering digit $2$ and digit $3$ images, which is insensitive to the value of $k'$. \kfed achieves a lower accuracy in general, which also varies substantially with $k'$.

\vspace{-3mm}
\section{Strengthening \scheme with Membership Privacy}
\vspace{-2mm}
One shortcoming of \scheme is that distribution of the global dataset, i.e., $({\cal I}_1,\ldots,{\cal I}_n)$, where ${\cal I}_j$ denotes the set of indices of the local data points at client $j$, will become publicly known during data sharing. It would be desirable for client $j$ to also keep ${\cal I}_j$ private, so that from the final clustering result $({\cal S}_1,\ldots,{\cal S}_k)$, no client or the server can infer which data point belongs to which client. 

{\bf Private ID alignment.} This can be achieved by adding an additional step of private ID alignment before data sharing in \scheme. We assume that each data point has a unique ID (e.g., its hash value), and we denote the set of IDs of the data points in ${\cal D}_j$ at client $j$ as ${\cal E}_j$. Before the clustering task starts, both ${\cal D}_j$ and ${\cal E}_j$ are kept private at client $j$, and are not known by the other clients and the server. As the first step of the clustering algorithm, all clients collectively run a private set union (PSU) protocol (see, e.g.,~\cite{psu:1,psu:2,psu:3}), over the local ID sets ${\cal E}_1,\ldots,{\cal E}_n$, such that by the end of PSU, each client knows the set of all data IDs ${\cal E} = \cup_{j\in[n]}\mathcal{E}_n$ in the network, without knowing the individual ID sets of other clients. Let $|{\cal E}| =m$. All the clients and the server agree on a bijective map between ${\cal E}$ and $[m]$. This also induces a bijective map between ${\cal E}_j$ and ${\cal I}_j$ at each client $j$.

After private ID alignment, each client learns the total number of data points in the system. During the secure data sharing phase, to protect the privacy of ${\cal I}_j \subset [m]$, each client $j$ secret shares a data $\overline{{\bf x}}_i^{(j)}$ for \emph{each} $i=1,\ldots,m$, as described in Section~\ref{sec:protocol}. Here $\overline{{\bf x}}_i^{(j)} = {\bf x}_i$ for $i \in {\cal I}_j$ and $\overline{{\bf x}}_i^{(j)} = {\bf 0}$ otherwise. We denote the secret share of $\overline{{\bf x}}_i^{(j')}$ created by client $j'$ for client $j$ as $\widetilde{{\bf x}}^{(j')}_{i,j}$. Unlike the original data sharing where each client $j$ only receives one secret share $\widetilde{{\bf x}}_{i,j}$ from the owner of ${\bf x}_i$, for each $i=1,\ldots,m$, now with membership privacy, client $j$ receives $n$ shares $\widetilde{{\bf x}}^{(1)}_{i,j},\ldots,\widetilde{{\bf x}}^{(n)}_{i,j}$, one from each client. Each client $j$ computes $\widetilde{{\bf x}}_{i,j} = \sum_{j'=1}^n \widetilde{{\bf x}}^{(j')}_{i,j}$. This completes the data sharing phase. Next, the clients perform the same coded center update process as in \scheme to obtain the final $k$-clustering. 

%We can show that 
With the extension of incorporating the private ID alignment step, \scheme can further protect the membership privacy of the data points, without sacrificing data privacy and clustering performance. We give more detailed description and analysis of this extended version of \scheme in Appendix~\ref{sec:membership}.
 
\vspace{-2mm}
\section{Conclusion and discussions}\label{sec:conclusion}
\vspace{-2mm}
We propose a secure federated clustering algorithm named \scheme, for clustering data points arbitrarily distributed across a federated learning network. \scheme's design builds upon the well-known Lloyd's algorithm, and \scheme universally achieves identical performance to a centralized Lloyd's execution. Moreover, \scheme protects the data privacy of each client via secure data sharing, and computation over coded data throughout the clustering process. It guarantees information-theoretic privacy against colluding clients, and no explicit knowledge about the data points and cluster centers at the server. Experiment results from clustering synthetic and real data demonstrate the universal superiority of \scheme and its computational practicality under a wide range of system parameters. Finally, we propose an extended version of \scheme to further provide membership privacy for the clients' datasets, such that, while knowing the final clustering, no party knows the owner of any data point. 

While the high level of security provided by \scheme is desirable for many applications, it cannot be directly applied to scenarios where the centers of clusters  (e.g., in mean estimation~\cite{lai2016agnostic,cuesta2008robust}), or the membership of specific data points (e.g., in clustering-based outlier detection~\cite{loureiro2004outlier,jiang2008clustering}) need to be known explicitly. These problems can be resolved respectively by having $\ell+t$ clients to collectively reveal the cluster centers, and having each client publish the IDs of their local data. 

\bibliographystyle{unsrt}
\bibliography{IEEEabrv,lib_v1}

\newpage
\appendix
\section*{Appendix}

\section{Data Quantization}
\label{appendix1}

%{\bf Quantization method.}
In \scheme, the operations of data sharing and distance computation are carried out in a finite field $\mathbb{F}_q$, for some large prime $q$. Hence, for a data point ${\bf x}_i$ from the real field, one needs to first quantize it onto $\mathbb{F}_q$.
% For any dataset from a real domain $\mathbf{x}_i$, we need to scale it to a certain range to fit in $\mathbb{F}_q$. For limited number of element in $\mathbb{F}_q$ rather than infinitely many elements in the real domain, a dataset from a real domain has precision loss after quantization. 
Specifically, for some scaling factor $\lambda$ that determines the quantization precision, we first scale ${\bf x}_i$ by $\lambda$, and embed the scaled value onto $\mathbb{F}_q$ as follows.
% We use $\eta$ to represent the level of precision for the dataset quantization. More specifically, we use following quantization function with parameter $\lambda $ by choice, such that the range of function is integer set $\left [ 0,\eta \right ) \cup \left [ q-\eta,q \right )$.
\begin{equation}
\label{quantization}
\bar{\mathbf{x}}_i = \mathcal{Q}(\mathbf{x}_i,\lambda)=\left\{\begin{array}{@{}ll}
\lfloor \lambda \mathbf{x}_i \rfloor ,   &\mathrm{if}~\mathbf{x}_i\geq 0  \\
\lfloor q+\lambda \mathbf{x}_i \rfloor, & \mathrm{if}~\mathbf{x}_i<0
\end{array}\right.,
%\quad\forall\, \mathbf{x}_i\in\left ( -1,1 \right ) ,
\end{equation}
where $\lfloor x\rfloor$ denotes the largest integer less than or equal to $x$, and the quantization function ${\cal Q}$ is applied element-wise. Assuming each element of $\lambda {\bf x}_i$ is within $[-\eta, \eta)$ for some $\eta > 0$, then on the range of ${\cal Q}$, $\left [ 0,\eta \right )$ is the image of the positive part, and $\left [ q-\eta,q \right )$ is the image of the negative part. While $\lambda$ controls the precision loss of quantization, it also needs to be chosen so that overflow does not occur during computations of \scheme. A larger $\eta$ requires a larger finite field size $q$ to avoid computation overflow, and a smaller $\eta$ leads to a higher precision loss between $\mathbf{x}_i$ and $\bar{\mathbf{x}}_i$. We can choose a proper scaling factor $\lambda$ to preserve enough precision while keeping the field size practical. 

To avoid computation overflow, we should choose $q$ such that all the intermediate computation results on the scaled data $\lambda {\bf x}_i$ are within the range $\left ( -\frac{q}{2},\frac{q}{2} \right ) $.
% The quantization scheme allows the $\lambda \cdot {\bf x}_i$ to operate on addition and subtraction, as long as operation result does not exceed the range $\left ( -\frac{q}{2},\frac{q}{2} \right ) $. 
In the worst case, the largest distance across data points and cluster centers  $D \triangleq \underset{i \in [m], h \in [k]}{\max} \left \| \boldsymbol{\mu}_{h}-\left |  \mathcal{S}_h \right | \cdot \mathbf{x}_{i} \right \|_2^2 $ results in the largest output value.
% where $\boldsymbol{\mu}_{h}$ is the center of $h$-th cluster and $\left |  \mathcal{S}_h \right |$ is the cardinality of data index set of $h$-th cluster. 
Therefore, we should choose $q$ that is at least $2\lambda^2 D$. 
% the aforementioned large prime $q$ has a lower bound of $2\eta^2\left |  \mathcal{S} \right |^2$, where $\mathcal{S}$ is the index set of overall dataset.
% $\left |  \mathcal{S} \right | = \Sigma_{h=1}^{k}\left |  \mathcal{S}_h \right |$

\section{Extension of \scheme with Membership Privacy} \label{sec:membership}

After the step of private ID alignment, each client learns the total number of data points in the system. Each client $j$ knows the set of IDs of its private data points as ${\cal I}_j$, but does not know the sets of data IDs of the other clients. 

During the secure data sharing phase, to protect the privacy of ${\cal I}_j \subset [m]$, each client $j$ first creates a data point $\overline{{\bf x}}_i^{(j)}$ for \emph{each} $i=1,\ldots,m$, where \begin{equation}\label{eq:auxiliary}
    \overline{{\bf x}}_i^{(j)} = \begin{cases}
    {\bf x}_i, & i \in {\cal I}_j\\{\bf 0}, & \textup{otherwise} 
    \end{cases}.
\end{equation}

Next, for each $i \in [m]$, and some design parameter $\ell$, as described in Section~\ref{sec:protocol}, using Lagrange interpolation and  evaluation, each client $j'$ encodes $\overline{{\bf x}}_i^{(j')}$, and creates a secret share $\widetilde{{\bf x}}^{(j')}_{i,j}$ for each $j \in [n]$ as
\begin{equation}
\label{encoding-membership}
\widetilde{{\bf x}}^{(j')}_{i,j}=\sum\limits_{u=1}^{\ell}\overline{{\bf x}}_{i,u}^{(j')} \cdot \prod_{v\in[\ell+t]\backslash\{u\}}\frac{\alpha_j-\beta_{v}}{\beta_{u}-\beta_{v}}
+\sum\limits_{u=\ell+1}^{\ell+t}{{\bf z}}_{i,u}^{(j')} \cdot \prod_{v\in[\ell+t]\backslash\{u\}}\frac{\alpha_j-\beta_{v}}{\beta_{u}-\beta_{v}},
\end{equation}
where $\overline{{\bf x}}_{i,1}^{(j')},\ldots,\overline{{\bf x}}_{i,\ell}^{(j')} \in \mathbb{F}_{q}^{\frac{d}{\ell}}$ are segments of $\overline{{\bf x}}_i^{(j')}$, and ${{\bf z}}_{i,\ell+1}^{(j')},\ldots,{{\bf z}}_{i,\ell+t}^{(j')} \in \mathbb{F}_q^{\frac{d}{\ell}}$ are noises randomly generated by client $j'$ for each $i \in [m]$.

Unlike the original data sharing of \scheme where each client $j$ only receives one secret share $\widetilde{{\bf x}}_{i,j}$ from the owner of ${\bf x}_i$, for each $i \in [m]$, client $j$ receives $n$ shares $\widetilde{{\bf x}}^{(1)}_{i,j},\ldots,\widetilde{{\bf x}}^{(n)}_{i,j}$, one from each client. Each client $j$ computes $\widetilde{{\bf x}}_{i,j} = \sum_{j'=1}^n \widetilde{{\bf x}}^{(j')}_{i,j}$. We can see from $(\ref{eq:auxiliary})$ and $(\ref{encoding-membership})$ that
\begin{align}
    \widetilde{{\bf x}}_{i,j} &= \sum\limits_{u=1}^{\ell}
    \left(\sum\limits_{j'=1}^{n}\overline{{\bf x}}_{i,u}^{(j')}\right) \cdot \prod_{v\in[\ell+t]\backslash\{u\}}\frac{\alpha_j-\beta_{v}}{\beta_{u}-\beta_{v}}
+\sum\limits_{u=\ell+1}^{\ell+t}\left(\sum\limits_{j'=1}^{n}{{\bf z}}_{i,u}^{(j')}\right) \cdot \prod_{v\in[\ell+t]\backslash\{u\}}\frac{\alpha_j-\beta_{v}}{\beta_{u}-\beta_{v}} \nonumber \\
&= \sum\limits_{u=1}^{\ell}
    {\bf x}_{i,u}\cdot \prod_{v\in[\ell+t]\backslash\{u\}}\frac{\alpha_j-\beta_{v}}{\beta_{u}-\beta_{v}}
+\sum\limits_{u=\ell+1}^{\ell+t}\overline{{\bf z}}_{i,u} \cdot \prod_{v\in[\ell+t]\backslash\{u\}}\frac{\alpha_j-\beta_{v}}{\beta_{u}-\beta_{v}}, \label{coded_data}
\end{align}
where $\overline{{\bf z}}_{i,u} = \sum\limits_{j'=1}^{n}{{\bf z}}_{i,u}^{(j')}$ is still uniformly distributed in $\mathbb{F}_q^{\frac{d}{\ell}}$.

This completes the phase of secure data sharing. 

As the data share in (\ref{coded_data}) resembles the algebraic structure of ${\bf f}_i(\alpha_j)$ in (\ref{encoding2}), for all $i \in [m]$ and $j \in [n]$, we can carry out the same coded center update process as in the original \scheme to reach the final $k$-clustering. 

The data $t$-privacy of this extended version of \scheme is guaranteed by the Lagrange secure data sharing in (\ref{encoding-membership}). Also, as mentioned above, it achieves the same clustering performance as the original \scheme, which demonstrates its universal performance according to Theorem~\ref{main_thm}. 

\end{document}